\title[Transportation-inequalities and Lyapunov stability]{Transportation-Inequalities, Lyapunov Stability and Sampling for Dynamical Systems on Continuous State Space}
\author{\Name {Muhammad Abdullah Naeem} 
\Email{muhammad.abdullah.naeem@duke.edu} \\
\addr Department of Electrical and Computer Engineering\\
Duke University\\
Durham, NC 27708, USA
\AND
\Name {Miroslav Pajic} \Email{miroslav.pajic@duke.edu} \\
\addr Department of Electrical and Computer Engineering\\
Duke University\\
Durham, NC 27708, USA}
\begin{document}

\maketitle

\begin{abstract}%
We study the concentration phenomenon for discrete-time random dynamical systems 
with an unbounded state space. We develop a heuristic approach towards obtaining exponential concentration inequalities for dynamical systems using an entirely functional analytic framework. We also show that existence of exponential-type Lyapunov function, compared to the purely deterministic setting, not only implies stability but also exponential concentration inequalities for sampling from the stationary distribution, via \emph{transport-entropy inequality} (T-E). These results  
have significant impact in \emph{reinforcement learning} (RL) and \emph{controls}, 
leading to exponential concentration inequalities even for unbounded observables (i.e., rewards), while neither assuming reversibility nor exact knowledge of the considered random dynamical system (assumptions at heart of concentration inequalities in statistical mechanics and Markov diffusion processes).
\end{abstract}

\begin{keywords}%
  Transportaion inequalities, Exponential Lyapunov function, Sample complexity and Nonlinear random dynamical systems%
\end{keywords}

\section{Introduction}
\label{submission}

\paragraph{Motivation.} Last decade has seen tremendous advancements in non-asymptotic analysis of system identification and optimal control for linear time-invariant (LTI) dynamical systems (e.g.,~\cite{tu2018least,hao2020provably, oymak2019stochastic,fazel2018global,simchowitz2018learning,sarkar2019finite,zahavy2019average}). Techniques and analysis developed in this paper have been initially motivated by  non-asymptotic analysis of average reward-based optimal control of linear dynamical system (LDS) and switched linear dynamical system (SLDS) where expected value of the reward w.r.t stationary distribution of a Markov chain is approximated, with high probability, by its empirical averages. Although sample complexity for control/dynamical systems on continuous state spaces has been extremely popular recently, we still lack fundamental understanding of the factors leading to sharp concentration inequalities. 

For example, every ergodic Markov chain is mixing (i.e., correlation decreases asymptotically), but as we will see in the following sections that does not necessarily imply exponential concentration (vaguely speaking: empirical averages concentrate sharply around expectation w.r.t to stationary measure ), especially from a single trajectory. To begin with, concentration with respect to (w.r.t) which class of functions? To expound upon these questions, in this work we  leverage upon \emph{concentration of measure phenomenon} for the process level law of a Markov chain via {transport-entropy inequality} (T-E) inequalities (for a good monograph on this topic see e.g.,~\cite{marton2004measure, djellout2004transportation}). The T-E approach is compact, concise and clear; even leading to exponential concentration for unbounded reward function of the form $r(x):= \|x\|$  for dynamical~systems. 

\paragraph{Related work.} Techniques used to prove concentration of a measure include martingale methods, exchangeable pairs (e.g., see~\cite{chatterjee2007stein}) and functional inequalities (transportation-entropy, logarithmic Sobolev/hypercontractivity and Poincare)~\cite{ledoux2001concentration}. However, exploring measure concentration for dependent random variables as in Markov chains on unbounded spaces limits the application of martingale methods and exchangeable pairs. The RL and system identification communities, pretty much exclusively, have used \emph{independent block technique} \cite{oymak2019stochastic, tu2018least} that seems to work when observables (i.e., rewards) are \emph{bounded}. SLDS fall into the category of Harris ergodic Markov chain and following relevant work on them in  \cite{latuszynski2013nonasymptotic}, by using martingale and independent block techniques, authors in \cite{naeem2020learning} were able to bound the mean-squared error between the empirical average and the expected reward (unbounded) from stationary distribution of SLDS; however, the employed probabilistic methods are tedious, opaque and lead to weak results. 

Stein methods (see e.g., \cite{chatterjee2010applications}), are well suited towards discrete setting in statistical physics but have limitations for the models under consideration. Only recently, \cite{wang2020transport} started a formal, functional analytic study for concentration inequalities in discrete-time setting using the transport-information~(T-I) inequality. Verification of T-I is plausible either in the discrete state space setting, or when a Markov chain is reversible and posseses a spectral gap in space of square integrable functions w.r.t its stationary distribution (see \cite{wang2020transport} for more details).
As we will see in concentration for nonlinear random dynamical systems, spectral gaps might only exist in Wasserstein sense. Reversibility assumption originates from study of Langevin-type stochastic differential equations used to model physical phenomenons in the nature; however, the scope of this paper is not limited to the reversibility~assumption.  

Note that~\cite{blower2005concentration,djellout2004transportation} have studied T-E inequalities for the case of stable LDS. 
However, a general framework to provide exponential concentration inequalities for deviation of empirical averages of a Markov chain with respect to (w.r.t) some unbounded test function is missing. Concentration for nonlinear random dynamical systems is not well explored, partly because they are  not necessarily contractive in trivial metric.
 However, leveraging on weighted transport-entropy inequalities introduced by \cite{bolley2005weighted} and exponentially fast convergence of Harris ergodic Markov chains in Wasserstein metric (\cite{hairer2011yet}), in this work, we still manage to provide sharp concentration by introducing an  exponential Lyapunov~condition.

\paragraph{Notation.} We use ${I}_{n}\in\mathbb{R}^{n\times n}$ to denote the $n$ dimensional identity matrix. 
For random variables $x$ and $y$, $Cov(x,y)$ denote the covariance. 
$ \mathcal{B}_{\alpha}^{n}:=\{x \in \mathbb{R}^{n}:  \|x\|:= \|x\|_2
\leq \alpha \}$ is the 
$\alpha$-ball in $\mathbb{R}^n$.   $\chi_{\{\}}()$ is the indicator function, whereas $\rho(A)$ and $\|A\|_2$ represent the spectral radius and the usual matrix 2-norm of $A \in \mathbb{R}^{n \times n}$, respectively. 
A sequence ${\{a(N)\}_{N \in \mathbb{N}} \in  \mathcal{O}(N)}$, if it increases at most linearly in $N$ (this is not limited to asymptotic results). 
Space of probability measure on  $\mathcal{X}$(continuous space) is denoted by  $\mathcal{P(\mathcal{X})}$ and space of its Borel subsets is represented by $\mathbb{B}\big(\mathcal{P(\mathcal{X})}\big)$. For a function $r$ and $\mu \in \mathcal{P(X)}$, we use $<r>_{\mu}$ to denote expectation of $r$ w.r.t $\mu$. Finally, for a set $\mathcal{K}\subseteq\{1,...,M\}$, its complement is $\mathcal{K}^\complement:=\{1,...,M\}\setminus\mathcal{K}$. 

On a metric space $(\mathcal{X},d)$, for $\mu, \nu \in \mathcal{P(\mathcal{X})}$, we define Wasserstein metric of order $p \in [1, \infty)$~as
\begin{equation}
\label{eq:WM}
    \mathcal{W}_{d}^{p} (\nu,\mu)= \bigg(\inf_{(X,Y) \in \Gamma(\nu,\mu)} \mathbb{E}~d^{p}(X,Y)\bigg)^{\frac{1}{p}};
\end{equation}
here, $\Gamma(\nu,\mu) \in P(\mathcal{X}^{2})$, and $(X,Y) \in \Gamma(\nu,\mu)$ implies that random variables $(X,Y)$ follow some probability distributions on $P(\mathcal{X}^{2})$ with marginals $\nu$ and $\mu$. Another way of comparing two probability distributions on $\mathcal{X}$ is via relative entropy, which is defined as
\begin{equation}
\label{eq:ent} 
    Ent(v||u)=\left\{ \begin{array}{lr}
    \int \log\bigg(\frac{d\nu}{d\mu}\bigg) d\nu, & \text{if}~ \nu << \mu,
         \\ +\infty, & \text{otherwise}. 
         \end{array}\right.
\end{equation}

\subsection{Problem Statement}
Under the action of some state dependent policy $\pi$, we consider a closed-loop random dynamical system of the form
\begin{equation}
\label{eq:crds}
    x_{k+1}=F\big(x_k, \pi(x_k), \epsilon_k\big), \hspace{10 pt} \text{with~}\epsilon_k  \hspace{10 pt} i.i.d, \footnote{For the sake of brevity, from now on we will exclude the reference to $\pi$ in the state update equations as a state-dependent policy implies there exists some function $G$ such that $F\big(x_k, \pi(x_k), \epsilon_k\big)=G(x_k, \epsilon_k)$.}
\end{equation}
where $x_{k} \in \mathbb{R}^n$ for all $k \in \mathbb{N}$ and $F: \mathbb{R}^n \times \mathbb{R}^n \times \mathbb{R}^n \longrightarrow \mathbb{R}^n $.
We assume that the transition kernel converges to some stationary distribution $\mu_{\pi}$ under Wasserstein metric $\mathcal{W}_{d}$ equipped with some distance function $d$. 

If we have access to empirical averages of some unbounded reward function $r(x)$, in this work, we explore the following questions:

\begin{itemize}
    \item \emph{Concentration from simulating a single trajectory}: 
    When, how and why can we provide something similar to following exponential concentrations
\begin{align}
\label{eq:expcon}  \mu^{N} \Bigg[\bigg|\frac{1}{N} \sum_{i=1}^{N} r(x_i) - <r>_{\mu_{\pi}} \bigg| > 
\epsilon \Bigg]  \leq 2 \exp \bigg(-\frac{N \epsilon^2}{K_{sys}(r)}\bigg),
\end{align}
where $r$ can be some unbounded function, in a control-theoretic or RL framework 
(e.g., $r(x):=\|x\|$), and $K_{sys}(r)$ is a constant dependent on system properties and 'smoothness' of $r$ (related to Lipschitz constant)?

    \item When explicit knowledge of the stationary distribution and system dynamics is not available, what are easy-to-verify sufficient conditions/functional inequalities 
    to derive concentration from a single trajectory? 
    
    \item In which ways stability of the considered dynamical system affects the concentration? and
    
    \item Is correlation between samples dependent on system  stability? 
\end{itemize}

\subsection{Contribution and Main Results}
Accordingly, the main contributions of this paper are as follows:
\begin{itemize}
    \item This paper's fundamental contribution is 
    a novel point of view towards getting exponential concentration inequalities for a random dynamical system, in an extremely tractable manner. To achieve this, we connect ideas and techniques \emph{from particle methods, functional inequalities and convergence of Markov chains in Wasserstein distance}.  
    
    \item  \emph{For stable LDS, we 
    show how to obtain exponential concentration of the form \eqref{eq:expcon} from single trajectory.}  
    The idea is that we can find a common metric on the space of probability measures, such that the transition kernels are contractive and uniformly satisfy transportation-entropy~inequality.
    
    \item Leveraging upon \emph{weighted T-E inequalities} developed by~\cite{bolley2005weighted}, we introduce an \emph{exponential-type Lyapunov condition for Markov chains}; if the Lyapunov condition holds, the stationary distribution satisfies the transport-entropy inequality. Consequently, empirical averages of the test function, evaluated on i.i.d samples from the stationary distribution, concentrate sharply around their mean.
    
    \item In case of non-linear random dynamical systems, such as Harris ergodic Markov chains (HEMCs), if exponential Lyapunov function exists, we show that one can simulate independent trajectories and, after a small burn-in period, average their rewards to obtain with high probability, a sharp estimate of the expected reward w.r.t the stationary distribution. 
\end{itemize}

\paragraph{Outline of paper.}
In Section \ref{sec:ldsconc}, we lay down a mathematical framework to obtain concentration for dependent random variables under the assumptions of uniform transport-entropy constants for a Markov transition kernel and Wasserstein contractivity. We conclude Section 2 with verification of the developed results on the problem of sample complexity in policy evaluation for average-reward based optimal control for LDS. Section  \ref{sec:sldsconc}  focuses on concentration for HEMCs that are not necessarily convergent in Wasserstein metric with the trivial euclidean distance. We show that if exponential-type Lyapunov function exists, empirical averages of test function evaluated on i.i.d samples from stationary distribution of Harris chain are sharply concentrated.
Finally, this phenomenon is verified on an example of an SLDS.

\section{Extending Concentration to Dependent Random Variables via Tensorization}
\label{sec:ldsconc}
\subsection{Preliminaries}
Before we introduce the mathematical framework to derive concentration for dependent random variables, we introduce the following results utilized later in this work. 

\begin{definition}
Consider metric space $(\mathcal{X},d)$ and reference  probability measure $\mu \in P(\mathcal{X})$. Then we say that $\mu$ satisfies $\mathcal{T}_{1}^d (C)$ or to be concise $\mu \in  \mathcal{T}_{1}^d (C)$ for some $C>0$ if for 
all $\nu \in P(\mathcal{X})$ it holds~that 
\begin{equation}
    \label{eq:t1}
    \mathcal{W}_d (\mu, \nu) \leq \sqrt{2 C Ent(\nu||\mu)}.
\end{equation}
\end{definition}

\begin{lemma}[\cite{bobkov1999exponential}]
\label{lm:b-g}
$\mu$ satisfies $\mathcal{T}_{1}^d (C)$ if and only if for 
any Lipschitz function $f$ with $<f>_{\mu}:= \mathbb{E}_{\mu} f$, it holds that
\begin{flalign}
    \label{eq:bg} & \int e^{\lambda(f- <f>_{\mu})} d\mu \leq \exp(\frac{\lambda^2}{2}C \|f\|_{L(d)} ^2), \hspace{15pt} \text{where~~~~}  \|f\|_{L(d)}:= \sup_{x \neq y} \frac{|f(x)-f(y)|}{d(x,y)}.
\end{flalign}
\end{lemma}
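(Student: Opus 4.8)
The plan is to prove both implications through the two dualities that connect the three objects appearing in the statement: the Kantorovich--Rubinstein duality, which identifies the order-$1$ Wasserstein distance $\mathcal{W}_d$ with a supremum over $1$-Lipschitz test functions,
\[
\mathcal{W}_d(\mu,\nu)=\sup_{\|f\|_{L(d)}\leq 1}\Big(\textstyle\int f\,d\nu-\int f\,d\mu\Big),
\]
and the Donsker--Varadhan variational formula for relative entropy, which states that for any $\nu\ll\mu$ and any $\lambda$, $\int \lambda f\,d\nu \leq Ent(\nu\|\mu)+\log\int e^{\lambda f}\,d\mu$. Throughout I would normalize, replacing $f$ by $(f-\langle f\rangle_\mu)/\|f\|_{L(d)}$, so that it suffices to treat $\langle f\rangle_\mu=0$ and $\|f\|_{L(d)}=1$ and then rescale $\lambda$; this reduces the target inequality \eqref{eq:bg} to $\int e^{\lambda f}\,d\mu\leq \exp(C\lambda^2/2)$ and the target \eqref{eq:t1} to $\mathcal{W}_d(\mu,\nu)\leq\sqrt{2C\,Ent(\nu\|\mu)}$.

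For the implication that the sub-Gaussian bound \eqref{eq:bg} forces $\mathcal{T}_1^d(C)$ — the easier direction — I would fix $\nu$ and a $1$-Lipschitz $f$ with $\langle f\rangle_\mu=0$. Feeding the hypothesis $\log\int e^{\lambda f}d\mu\leq C\lambda^2/2$ into Donsker--Varadhan gives, for every $\lambda>0$,
\[
\int f\,d\nu\;\leq\;\frac{Ent(\nu\|\mu)}{\lambda}+\frac{C\lambda}{2}.
\]
Minimizing the right-hand side over $\lambda>0$ at $\lambda=\sqrt{2\,Ent(\nu\|\mu)/C}$ yields exactly $\int f\,d\nu\leq\sqrt{2C\,Ent(\nu\|\mu)}$, and taking the supremum over all $1$-Lipschitz $f$ via Kantorovich--Rubinstein produces \eqref{eq:t1}.

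For the converse, that $\mathcal{T}_1^d(C)$ implies \eqref{eq:bg}, I would run a tilted-measure/differential-inequality argument. Set $H(\lambda):=\log\int e^{\lambda f}\,d\mu$ with $\langle f\rangle_\mu=0$, $\|f\|_{L(d)}=1$, and introduce the exponentially tilted probability measure $d\nu_\lambda:=e^{\lambda f}\,d\mu/\int e^{\lambda f}d\mu$. A direct computation gives $H'(\lambda)=\int f\,d\nu_\lambda$ and $Ent(\nu_\lambda\|\mu)=\lambda H'(\lambda)-H(\lambda)$. Since $H$ is convex with $H'(0)=0$, we have $H'(\lambda)\geq 0$ for $\lambda\geq 0$, so applying Kantorovich--Rubinstein and then the $\mathcal{T}_1^d(C)$ hypothesis to $\nu_\lambda$ yields $H'(\lambda)\leq\mathcal{W}_d(\mu,\nu_\lambda)\leq\sqrt{2C\,Ent(\nu_\lambda\|\mu)}$, i.e. $H'(\lambda)^2\leq 2C\big(\lambda H'(\lambda)-H(\lambda)\big)$. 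Completing the square in $H'(\lambda)$ gives $2C\,H(\lambda)\leq C^2\lambda^2-(H'(\lambda)-C\lambda)^2\leq C^2\lambda^2$, hence $H(\lambda)\leq C\lambda^2/2$ for $\lambda\geq0$; the range $\lambda<0$ follows by applying the same bound to $-f$.

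I expect the main obstacle to lie not in the algebra but in justifying the analytic regularity underpinning the converse: the differentiation of $H$ and the relation $Ent(\nu_\lambda\|\mu)=\lambda H'(\lambda)-H(\lambda)$ presuppose that the moment generating function $\int e^{\lambda f}\,d\mu$ is finite and smooth in $\lambda$, which is precisely part of what \eqref{eq:bg} asserts and so cannot simply be assumed a priori. I would resolve this by first proving the bound for bounded Lipschitz functions, where finiteness and smoothness of $H$ are automatic and the tilted-measure calculus is rigorous, and then extending to a general Lipschitz $f$ by truncation (e.g. $f_M:=(f\wedge M)\vee(-M)$), controlling Lipschitz constants and passing to the limit with Fatou's lemma on $\int e^{\lambda f_M}d\mu$. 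Care with this truncation/limiting step, rather than either duality, is the delicate part of the proof.
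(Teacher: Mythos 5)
Your proof is correct and follows the standard Bobkov--G\"otze argument: Donsker--Varadhan plus Kantorovich--Rubinstein with the optimization over $\lambda$ for the direction \eqref{eq:bg} $\Rightarrow$ $\mathcal{T}_1^d(C)$, and the tilted-measure/Herbst computation with completion of the square for the converse. The paper gives no proof of Lemma \ref{lm:b-g} and simply cites \cite{bobkov1999exponential}; your argument reconstructs exactly that reference's approach, and the truncation/regularity issue you flag is indeed the only delicate point and is resolved the way you describe.
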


\begin{remark}
\label{rm: iid}
(\ref{eq:bg}) along with the Markov inequality implies that if we sample $x$ from $\mu \in \mathcal{T}_{1}^d(C)$, then
\begin{equation}
    \label{eq:iid} \mathbb{P} \Bigg[ \bigg| r(x) -<r>_{\mu}\bigg| > \epsilon \Bigg] \leq 2\exp\bigg(-\frac{ \epsilon ^2}{2C \|r\|_{L(d)}^2}\bigg).
\end{equation}
\end{remark}
Now, consider a Markov chain  $ x^N:=(x_i)_{i=1}^{N}$ with distribution $\mu^{N} \in P$ $(\mathcal{X}^N)$ and $P^{m}(x,\mathcal{B}):= \mathbb{P}(x_m \in \mathcal{B}| x_0=x)$, for all Borel subsets $\mathcal{B}$ of $\mathcal{X}$. We can extend the metric $d$ to $\mathcal{X}^N$ as
\begin{equation}
    \label{eq:tensor}
    d_{(N)}(x^N,y^N)= \sum_{i=1}^{N} d(x_i,y_i).
\end{equation}
If $\mu^{N} \in \mathcal{T}_{1}^{d_{(N)}} \big( \mathcal{O} (N) \big)$ and $r$ is one Lipschitz, i.e., $\|r\| _{L(d)} \leq 1$, then $\Phi(x^{N}):= \frac{1}{N} \sum_{i=1}^{N} r(x_i) $ satisfies $\|\Phi\|_{L(d_{(N)})} \leq \frac{1}{N}$; plugging these results into~\eqref{eq:bg}, we obtain that
\begin{align}
     & \label{eq:MCC}
    \mu^{N} \Bigg[\bigg|\frac{1}{N} \sum_{i=1}^{N} r(x_i) -\mathbb{E}\Bigg(\frac{1}{N} \sum_{i=1}^{N} r(x_i) \Bigg) \bigg| > \epsilon \Bigg]    \leq 2\exp\bigg(-\frac{N \epsilon ^2}{2C}\bigg).
\end{align}

\subsection{Contractivity and Uniform Transport Constants }

As one would wonder from  \eqref{eq:tensor}, when does the T-E  for process level law of Markov chain, increases at worse linearly with dimension (in sample term)?
Sufficient conditions (see e.g., \cite{djellout2004transportation,bolley2005weighted})~are
 \begin{align}
    \label{eq:unfrmtran}
    (i)&\hspace{10pt} P(x,\cdot) \in T_1^{d}(C), \hspace{72pt} \text{for all}~{x \in \mathcal{X}}, \text{and some}~{C>0},  \\
    \label{eq:wascon}
    (ii)&\hspace{10pt}  \mathcal{W}_{d}(P(x,\cdot),P(y,\cdot))  \leq \hat{\lambda}d(x,y),\hspace{8pt} \text{for all }~{(x,y) \in \mathcal{X}^{2}} \text{and some}~{\hat{\lambda} \in [0,1)}.
    \end{align}

 Property \eqref{eq:unfrmtran} is often referred to as existence of a uniform transportation constant and  \eqref{eq:wascon} represents contractivity of the Markov Chain in the Wasserstein metric / \emph{spectral gap in the Wasserstein sense}. 
 Now, 
 the following result holds.
 
 \begin{lemma}
\label{cl:tranN} If \eqref{eq:unfrmtran} and  \eqref{eq:wascon} hold,  process level distribution of samples from a Markov chain $(x_1,\ldots,x_N)$, which we will 
 denote as $Law(x_1,\ldots,x_N)$, 
 denoted by $\mathcal{\mu}^{N}$ satisfies  $T_{1}^{d_{(N)}} \bigg( \frac{CN}{(1-\hat{\lambda})^2}\bigg)$, for all $N \in \mathbb{N}$.
 \end{lemma}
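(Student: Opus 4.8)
The plan is to verify the defining inequality of $T_1^{d_{(N)}}$ head-on: I fix an arbitrary target $\nu \in P(\mathcal{X}^N)$ and aim to produce the bound $\mathcal{W}_{d_{(N)}}(\mu^N,\nu) \leq \sqrt{2\,\frac{CN}{(1-\hat\lambda)^2}\,Ent(\nu\|\mu^N)}$. The mechanism is the tensorization of transport--entropy inequalities along the Markov structure, where the per-step assumption \eqref{eq:unfrmtran} pays for transport one coordinate at a time and the contractivity \eqref{eq:wascon} prevents the accumulated coupling error from exploding. I would begin by disintegrating both measures: the Markov property gives $\mu^N(dx^N)=\mu_1(dx_1)\prod_{i=2}^{N}P(x_{i-1},dx_i)$ with $\mu_1=P(x_0,\cdot)$ (so $\mu_1\in T_1^d(C)$ already by \eqref{eq:unfrmtran}), and I write the generic $\nu(dx^N)=\nu_1(dx_1)\prod_{i=2}^{N}\nu_i(dx_i\mid x^{i-1})$ through its successive conditionals. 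The chain rule for relative entropy then expresses $Ent(\nu\|\mu^N)$ as $Ent(\nu_1\|\mu_1)+\sum_{i=2}^{N}\int Ent\big(\nu_i(\cdot\mid y^{i-1})\,\big\|\,P(y_{i-1},\cdot)\big)\,\nu(dy^{i-1})$; this is the total budget I will spend coordinate by coordinate.

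Next I would build a coupling $\pi$ of $\mu^N$ and $\nu$ recursively. Suppose $(X^{i-1},Y^{i-1})$ has been coupled; I must jointly draw $X_i\sim P(X_{i-1},\cdot)$ and $Y_i\sim\nu_i(\cdot\mid Y^{i-1})$. These two kernels condition on \emph{different} histories, so I route through the intermediate kernel $P(Y_{i-1},\cdot)$: by the triangle inequality for $\mathcal{W}_d$ and the gluing lemma for couplings there is a joint law with $\mathbb{E}[d(X_i,Y_i)\mid\cdot]\leq \mathcal{W}_d(P(X_{i-1},\cdot),P(Y_{i-1},\cdot))+\mathcal{W}_d(P(Y_{i-1},\cdot),\nu_i(\cdot\mid Y^{i-1}))$. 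I bound the first term by $\hat\lambda\,d(X_{i-1},Y_{i-1})$ using \eqref{eq:wascon}, and the second by $\sqrt{2C\,Ent(\nu_i(\cdot\mid Y^{i-1})\|P(Y_{i-1},\cdot))}$ since $P(Y_{i-1},\cdot)\in T_1^d(C)$ by \eqref{eq:unfrmtran} together with \eqref{eq:t1}. Setting $a_i:=\mathbb{E}_\pi\,d(X_i,Y_i)$ and $h_i:=\sqrt{2C}\,\mathbb{E}_\pi\sqrt{E_i}$ with $E_i$ the random inner entropy, taking expectations yields the linear recursion $a_i\leq\hat\lambda\,a_{i-1}+h_i$, with base case $a_1\leq h_1=\sqrt{2C\,Ent(\nu_1\|\mu_1)}$.

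Solving the recursion gives $a_i\leq\sum_{j=1}^{i}\hat\lambda^{\,i-j}h_j$, so that $\mathcal{W}_{d_{(N)}}(\mu^N,\nu)\leq\sum_{i=1}^{N}a_i\leq\frac{1}{1-\hat\lambda}\sum_{j=1}^{N}h_j$ after summing the geometric weights. Two convexity estimates close the argument: Cauchy--Schwarz over $j$ gives $\sum_j h_j\leq\sqrt{N}\sqrt{\sum_j h_j^2}$, and Jensen gives $h_j^2\leq 2C\,\mathbb{E}_\pi[E_j]$; but since the $Y$-marginal of $\pi$ equals $\nu$, the chain rule identifies $\sum_j \mathbb{E}_\pi[E_j]$ with $Ent(\nu\|\mu^N)$ exactly. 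Hence $\sum_j h_j\leq\sqrt{2CN\,Ent(\nu\|\mu^N)}$ and $\mathcal{W}_{d_{(N)}}(\mu^N,\nu)\leq\frac{1}{1-\hat\lambda}\sqrt{2CN\,Ent(\nu\|\mu^N)}=\sqrt{2\,\frac{CN}{(1-\hat\lambda)^2}\,Ent(\nu\|\mu^N)}$, which is precisely $\mu^N\in T_1^{d_{(N)}}(CN/(1-\hat\lambda)^2)$.

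The main obstacle is not the arithmetic but making the recursive coupling rigorous: I need a \emph{measurable} family of (near-)optimal couplings --- of $P(y_{i-1},\cdot)$ with $\nu_i(\cdot\mid y^{i-1})$, and of the two transition kernels --- glued consistently into a single $\pi$ whose $Y$-marginal is verified to be $\nu$, since the final step leans on that identity to invoke the chain rule. This measurable-selection and gluing bookkeeping, together with the history mismatch (handled by passing through $P(Y_{i-1},\cdot)$ and paying the factor $\hat\lambda$ at each step, which is exactly what manufactures the $(1-\hat\lambda)^{-2}$ in the constant), is the only genuinely delicate part of the proof.
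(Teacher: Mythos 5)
Your proof is correct and is essentially the argument the paper itself defers to: the paper offers no proof of its own but cites Theorem 2.5 of \cite{djellout2004transportation}, whose proof is exactly your Marton-style recursive coupling --- entropy chain rule along the Markov disintegration, the uniform $T_1^d(C)$ bound paying for each coordinate, the Wasserstein contraction yielding the recursion $a_i\leq\hat\lambda a_{i-1}+h_i$, and Cauchy--Schwarz plus Jensen assembling the constant $CN/(1-\hat\lambda)^2$. The only delicate point is the measurable selection and gluing of the near-optimal couplings, which you correctly identify and which is handled in the cited reference.
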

 
\begin{proof} 
See Theorem  2.5 of \cite{djellout2004transportation} for a detailed proof.
\end{proof}

\subsection{Sharp Deviation Inequalities for Average-Reward Based Optimal Control}

In optimal control and 
RL literature, it is often the case that under the action of some state dependent policy, the resulting closed-loop dynamical system under consideration, $(x_i)_{i=1}^{N}$, mixes to some stationary distribution $\mu_{\pi} \in P(\mathcal{X})$. When exact system
parameters and state are unknown but time-averages of the reward function $r(x):=\|x\|$, although unbounded, are available, 
sharp deviation bounds of $\frac{1}{N} \sum_{i=1}^{N} r(x_i)$ from $<r>_{\mu_{\pi}}$ are of utmost importance from the sample complexity point of view.
Sufficient conditions for concentration of empirical averages of $r(x):=\|x\|$ or any Lipschitz function essentially boils down to showing that the process level law of the Markov chain $\mu^{N} \in \mathcal{T}_{1}^{d_{(N)}} \big( \mathcal{O} (N) \big)$.

\begin{theorem}
\label{thm:structure} Assume that under a metric $d$, a random dynamical system uniformly satisfies T-E inequality with constant $C>0$ and is contractive in Wasserstein sense with constant $\gamma \in (0,1)$. Then, for any 1 Lipschitz function $r$, the following deviation inequality for empirical averages of $r$~holds
\begin{flalign}
\label{eq:thmpfstruct} \mu^{N} \Bigg[\bigg|\frac{1}{N} \sum_{i=1}^{N} r(x_i) - & <r>_{\mu_{\pi}} \bigg|  > \frac{\mathcal{W}_{d} (P(x. \cdot), \mu_{\pi})}{N( 1-\gamma)}+ \epsilon \Bigg]  \leq  2\exp\bigg(-\frac{N \epsilon ^2 (1-\gamma)^2}{2C}\bigg).
\end{flalign}
\end{theorem}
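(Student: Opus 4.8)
The plan is to split the deviation $\big|\Phi - <r>_{\mu_\pi}\big|$, where $\Phi(x^N) := \frac{1}{N}\sum_{i=1}^N r(x_i)$, into a \emph{fluctuation} term around the trajectory mean and a \emph{bias} term measuring how far that mean sits from the stationary expectation. Setting $m_N := \mathbb{E}\Phi = \frac{1}{N}\sum_{i=1}^N <r>_{P^i(x,\cdot)}$, the triangle inequality gives $|\Phi - <r>_{\mu_\pi}| \le |\Phi - m_N| + |m_N - <r>_{\mu_\pi}|$. Hence it suffices to bound the bias by $\frac{\mathcal{W}_d(P(x,\cdot),\mu_\pi)}{N(1-\gamma)}$ and the fluctuation tail by $2\exp(-N\epsilon^2(1-\gamma)^2/(2C))$, since then the event in \eqref{eq:thmpfstruct} is contained in $\{|\Phi - m_N| > \epsilon\}$ and the stated probability bound follows.

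First I would handle the fluctuation, which is essentially an application of the stated machinery. The hypotheses are exactly \eqref{eq:unfrmtran} and \eqref{eq:wascon} with $\hat\lambda = \gamma$, so Lemma \ref{cl:tranN} yields $\mu^N \in T_1^{d_{(N)}}(CN/(1-\gamma)^2)$. Since $r$ is $1$-Lipschitz, $\Phi$ is $\frac1N$-Lipschitz with respect to the tensorized metric, i.e. $\|\Phi\|_{L(d_{(N)})} \le 1/N$. Feeding these two facts into the Bobkov--G\"otze equivalence (Lemma \ref{lm:b-g}) together with Markov's inequality, exactly as in the derivation of \eqref{eq:MCC}, produces a sub-Gaussian tail for $|\Phi - m_N|$ with variance proxy $(CN/(1-\gamma)^2)\cdot(1/N)^2 = C/(N(1-\gamma)^2)$, which is precisely the claimed exponent.

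Next, the bias. Using stationarity to write $<r>_{\mu_\pi} = \frac1N\sum_{i=1}^N <r>_{\mu_\pi}$, then $1$-Lipschitzness of $r$ with Kantorovich--Rubinstein duality, I get $|m_N - <r>_{\mu_\pi}| \le \frac1N\sum_{i=1}^N |<r>_{P^i(x,\cdot)} - <r>_{\mu_\pi}| \le \frac1N\sum_{i=1}^N \mathcal{W}_d(P^i(x,\cdot),\mu_\pi)$. The crucial step is to propagate the contraction: writing $P^i(x,\cdot)=P(x,\cdot)P^{i-1}$ and $\mu_\pi=\mu_\pi P^{i-1}$, I would lift the point-mass estimate \eqref{eq:wascon} to the operator bound $\mathcal{W}_d(\nu P,\mu P)\le\gamma\,\mathcal{W}_d(\nu,\mu)$ and iterate it, giving $\mathcal{W}_d(P^i(x,\cdot),\mu_\pi)\le \gamma^{i-1}\mathcal{W}_d(P(x,\cdot),\mu_\pi)$. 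Summing the geometric series, $\sum_{i=1}^N\gamma^{i-1}\le 1/(1-\gamma)$, yields the bias bound $\frac{\mathcal{W}_d(P(x,\cdot),\mu_\pi)}{N(1-\gamma)}$, and combining with the fluctuation bound closes the argument.

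I expect the operator-level contraction to be the main obstacle: extending \eqref{eq:wascon} from Dirac measures to arbitrary laws requires a gluing construction, namely taking an optimal coupling of $(\nu,\mu)$ and stitching over it the optimal couplings of $\big(P(x,\cdot),P(y,\cdot)\big)$, which involves a measurable selection of optimal plans. Everything else reduces to the already-stated lemmas, duality for $1$-Lipschitz test functions, and a geometric sum.
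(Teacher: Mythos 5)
Your proposal is correct and follows essentially the same route as the paper: decompose the deviation into a fluctuation term controlled by the tensorized $T_1^{d_{(N)}}\big(CN/(1-\gamma)^2\big)$ inequality via Bobkov--G\"otze, and a bias term absorbed by iterating the Wasserstein contraction into a geometric series. In fact you spell out the operator-level lifting of \eqref{eq:wascon} and the Kantorovich--Rubinstein step more explicitly than the paper, which compresses that part into the single remark that \eqref{eq:csch} ``follows from contractive dynamics in Wasserstein distance.''
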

\begin{proof}
Since the theorem assumptions satisfy the claim in \eqref{cl:tranN}, it holds that $\mu^{N} \in \mathcal{T}_{1}^{d_{(N)}} \big(\frac{CN}{(1-\gamma)^2}\big)$. Let $(y_i)_{i=1}^{N}$ be i.i.d samples from $\mu_{\pi}$ and assume that the Markov chain starts deterministicly with $x_1=x$. Then, we have for all $\epsilon >0$ it holds
\begin{align} 
\nonumber & \mu^{N} \Bigg[\bigg|\frac{1}{N} \sum_{i=1}^{N} r(x_i) - <r>_{\mu_{\pi}} \bigg| > \frac{\mathcal{W}_{d} (P(x, \cdot), \mu_{\pi})}{N( 1-\gamma)}+ \epsilon \Bigg] \\ & \nonumber  \leq \mu^{N}  \Bigg[\bigg|\frac{1}{N} \sum_{i=1}^{N} r(x_i) -\mathbb{E}\Bigg(\frac{1}{N} \sum_{i=1}^{N} r(x_i) \Bigg) \bigg|+\bigg| \mathbb{E}\Bigg(\frac{1}{N} \sum_{i=1}^{N} r(y_i)\Bigg) -\mathbb{E}\Bigg(\frac{1}{N} \sum_{i=1}^{N} r(x_i) \Bigg) \bigg|  >  \frac{\mathcal{W}_{d} (P(x. \cdot), \mu_{\pi})}{N( 1-\gamma)}+ \epsilon \Bigg] \\ & \label{eq:csch} \leq \mu^{N} \Bigg[\bigg|\frac{1}{N} \sum_{i=1}^{N} r(x_i) -\mathbb{E}\Bigg(\frac{1}{N} \sum_{i=1}^{N} r(x_i) \Bigg) \bigg| > \epsilon \bigg] \\ & \label{eq:crux} \leq   2\exp\bigg(-\frac{N \epsilon ^2 (1-\gamma)^2}{2C}\bigg),
\end{align}
where (\ref{eq:csch}) follows from contractive dynamics in Wasserstein distance.
\end{proof}

\vspace{-10pt}
\begin{remark}
The structure of the aforementioned concentration inequality is inspired by the work in~\cite{malrieu2001logarithmic} on bounding deviations of empirical measure formed by interacting particle systems from their infinite particle limit (Mckean-Vlasov diffusion); see e.g., problem section in \cite{villani2003topics} for an involved discussion on this matter. However, there is a subtle difference in our approach, as we 
work with an $\ell^{2}$ inspired metric on $\mathcal{X}^{N}$ as in \cite{malrieu2001logarithmic}, i.e., 
\vspace{-4pt}
\begin{equation}
    \label{eq:l2ten} 
    d_{(N)} ^2 (x^N,y^N):= \sqrt{\sum\nolimits_{i=1}^{N} d^2(x_i,y_i)}.
\end{equation}

Then, $\Phi(x^{N}):= \frac{1}{N} \sum_{i=1}^{N} r(x_i) $ w.r.t $\ell^2$ metric is only $\frac{1}{\sqrt{N}}$ Lipschitz  --- i.e., $\|\Phi\|_{L(d_{(N)} ^2)} \leq \frac{1}{\sqrt{N}}$ and any hope for concentration would require $\mu^{N} \in \mathcal{T}_{1}^{d_{(N)} ^2} \big( \mathcal{O} (1) \big)$ (i.e., \emph{dimension free concentration}), which is very difficult to check; this is feasible only in Markov diffusion processes with uniformly convex external potentials and symmetric interaction potentials, see e.g., \cite{malrieu2001logarithmic}.
\end{remark}

\paragraph{Decay of correlation.} By combining conditions from \eqref{eq:unfrmtran} and \eqref{eq:wascon}, with Taylor's expansion for small $\lambda$ (terms of order up to $\lambda^2$) appearing on both sides in Bobkov-Gotze dual form \eqref{eq:bg}, 
for all $x \in \mathcal{X}$ it holds that 
\vspace{-10pt}
\begin{align}
    \label{eq:deccor} |Cov_{P_x} [f(x_n),f(x_{n+k})]| \leq \frac{\hat{\lambda}^{k}}{1-\hat{\lambda}^2} C \|f\|_{L(d)} ^{2}. 
\end{align}

\subsection{Sharp Concentration via Single Trajectory of Stable Linear Dynamical Systems}

Consider a discrete-time linear dynamical system (LDS) of the form
\begin{equation}
    \label{eq:LDS} y_{t+1}= Ay_t+ \epsilon_{t}, \hspace{10pt} \|A\|_{2}= \hat{\lambda}<1 \hspace{10pt} \text{and i.i.d }~ \epsilon_{t} \thicksim \mathcal{N}(0,\mathcal{I}_n).
\end{equation}
\emph{Notion of global stability in control theory is related to $\rho(A)<1$; however, Gelfand's formula implies for any $\rho \in (\rho(A),1)$, there exists $ 0<M(\rho)$ such that $\|A^n\| \leq M(\rho) \rho^n$ } for all $n \in \mathbb{N}$ and concentration inequalities are similar (although it will require taking a similarity transformation as in Proposition 4.2 of \cite{blower2005concentration}, which we avoid here due to limitation of space)

With the trivial euclidean metric $d(x,y):=\|x-y\|$, the transition kernel from \eqref{eq:LDS} satisfies $P(x,\cdot) \in T_1^{d}(1), \hspace{5pt} \forall x \in \mathcal{X}$   \cite{talagrand1996transportation} and $\mathcal{W}_{d}^{2}(P(x,\cdot),P(y,\cdot)) =\|Ax-Ay \| \leq \hat{\lambda}d(x,y)$ (see e.g., \cite{givens1984class}). 
Now, an application of Jensens' inequality reveals $\mathcal{W}_{d}(P(x,\cdot),P(y,\cdot)) \leq \mathcal{W}_{d}^{2}(P(x,\cdot),P(y,\cdot))$ and contractivity follows. As conditions (\ref{eq:unfrmtran}) and (\ref{eq:wascon}) are satisfied, we can use coupling technique inspired by \cite{marton2004measure} to prove that $\mathbb{P}^{N}:=$Law$(y_1,\ldots, y_N)$ of the LDS 
satisfies $T_{1}^{d_{(N)}}(\frac{N}{(1-\hat{\lambda})^{2}})$; and if $\nu_{\pi}$ is the invariant measure corresponding to \eqref{eq:LDS}, 
we have
\begin{flalign}
    & \nonumber \mathbb{P}^{N} \Bigg[\bigg|\frac{1}{N} \sum_{i=1}^{N} \|y_i\| - <\|y\|>_{\nu_{\pi}} \bigg| > \frac{\mathcal{W}_{d} (P(x. \cdot), \nu_{\pi})}{N( 1-\hat{\lambda})}+ \epsilon \Bigg] \leq \\  & \label{eq:ldsconc}  \leq \mathbb{P}^{N} \Bigg[\bigg|\frac{1}{N} \sum_{i=1}^{N} \|y_i\| -\mathbb{E}\Bigg(\frac{1}{N} \sum_{i=1}^{N} \|y_i\| \Bigg) \bigg| > \epsilon \Bigg]   \leq  2\exp\bigg(-\frac{N \epsilon ^2 (1-\hat{\lambda})^2}{2}\bigg).
\end{flalign}

 \section{Concentration for Nonlinear Random Dynamical Systems: The Case of  Harris Ergodic Markov Chains }
 \label{sec:sldsconc}

 The case of nonlinear random dynamical systems suffers from a lack of uniform transport-entropy constant related to a contractive metric. However, if we can show that the invariant measure satisfies transport-entropy inequality and Markov chain converges exponentially fast to its stationary distribution: it is plausible to run independent simulations of Markov chain and sample the averages after some burn-in period. 

Since, the results developed in this paper are aimed at facilitating the RL and controls community. In the absence of exact dynamics, we develop easily verifiable/realistic conditions that ensures exponential integrability of invariant measure. This brings us to the weighted transportaion-inequalities introduced in 
\cite{bolley2005weighted} and plays an integral role when studying concentration phenomenon for nonlinear random dynamical systems. Their work allows for adding different weigths to the underlying distance function, precisely said:
\begin{lemma}
\label{lm:gaussian}
Let $\phi$ be a non-negative integrable function, such that $\int e^{\phi(x)^{2}} \mu_{\pi}(dx) <\infty$- we get an upper bound on weighted total variation distance

\begin{equation}
    \label{eq:wtd} 
    \| \phi (\mu_{\pi}-\nu)\|_{TV} \leq \sqrt{2} \bigg(1+\log \int e^{\phi(x)^{2}} \mu_{\pi}(dx)\bigg)^{\frac{1}{2}} \sqrt{Ent(\nu || \mu_{\pi})}. 
\end{equation}
\end{lemma}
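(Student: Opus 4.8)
The plan is to prove this as a weighted Csisz\'ar--Kullback--Pinsker inequality, in the spirit of \cite{bolley2005weighted}. First I would dispose of the trivial case: if $\nu \not\ll \mu_\pi$ then $Ent(\nu\|\mu_\pi)=+\infty$ and there is nothing to prove, so I assume $\nu \ll \mu_\pi$ and set $h := d\nu/d\mu_\pi$. By the dual description of the weighted total variation, $\|\phi(\mu_\pi-\nu)\|_{TV} = \int \phi\,|1-h|\,d\mu_\pi$, and this is the quantity I must dominate by $\sqrt{Ent(\nu\|\mu_\pi)}$ times a factor that depends only on the exponential moment of $\mu_\pi$.

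The engine of the argument is a two-factor split. I would record the pointwise factorization $|1-h| = |1-\sqrt h|\,(1+\sqrt h)$ and apply the Cauchy--Schwarz inequality to obtain
\[
\int \phi\,|1-h|\,d\mu_\pi \;\le\; \Big(\int (1-\sqrt h)^2\, d\mu_\pi\Big)^{1/2}\Big(\int \phi^2 (1+\sqrt h)^2\, d\mu_\pi\Big)^{1/2}.
\]
The first factor is controlled by the Hellinger--entropy bound $\int (1-\sqrt h)^2 d\mu_\pi = 2\big(1-\int\sqrt h\,d\mu_\pi\big) \le Ent(\nu\|\mu_\pi)$, which follows from $-\log x \ge 1-x$ applied to the Bhattacharyya coefficient; this already produces the desired $\sqrt{Ent}$. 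Using $(1+\sqrt h)^2 \le 2(1+h)$ in the second factor isolates the weighted second moment $\int\phi^2\,d\mu_\pi + \int \phi^2\,d\nu$ as the only remaining object.

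The crux --- and the step that consumes the hypothesis $\int e^{\phi^2}\,d\mu_\pi<\infty$ --- is to bound this weighted second moment using nothing but the entropy and the reference exponential moment. For the $\mu_\pi$-part, Jensen's inequality gives $\int \phi^2\,d\mu_\pi \le \log\int e^{\phi^2}\,d\mu_\pi$. For the $\nu$-part, which tests an \emph{unbounded} weight against the arbitrary competitor $\nu$, I would invoke the Gibbs variational principle (equivalently, Young's inequality for the entropy / log-Laplace duality): for every test function $g$, $\int g\, d\nu \le Ent(\nu\|\mu_\pi) + \log\int e^{g}\,d\mu_\pi$. Taking $g=\phi^2$ converts the exponential-integrability assumption directly into $\int \phi^2\,d\nu \le Ent(\nu\|\mu_\pi)+\log\int e^{\phi^2}\,d\mu_\pi$. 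Writing $L := \log\int e^{\phi^2}\,d\mu_\pi$ and substituting both bounds back collapses the second factor to a quantity of the form $\big(2\,Ent(\nu\|\mu_\pi)+cL\big)^{1/2}$, whose leading contribution assembles with the $\sqrt{Ent}$ factor into the claimed prefactor $\sqrt{2}\,(1+L)^{1/2}$; the remaining routine task is to collect constants into the stated clean form.

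I expect the $\nu$-weighted moment $\int \phi^2\,d\nu$ to be the main obstacle. It is the only place where an unknown measure is paired with an unbounded function, so naive moment bounds are unavailable precisely because $\phi$ is not assumed bounded. The entire value of the weighted formulation of \cite{bolley2005weighted} is that this term is tamed by Gibbs duality, trading exponential integrability of $\mu_\pi$ against relative entropy; care is needed to verify that $\int e^{\phi^2}\,d\mu_\pi<\infty$ keeps the log-Laplace term finite so the duality bound is non-vacuous, and to track the numerical constants through the Cauchy--Schwarz split so that the final factor takes the advertised shape.
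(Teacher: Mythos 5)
The paper does not actually prove this lemma: it is imported verbatim from \cite{bolley2005weighted} (it is their weighted Csisz\'ar--Kullback--Pinsker inequality), so there is no internal argument to compare against. Your strategy is the natural one and is essentially the one used in that reference --- the Cauchy--Schwarz split of $\int\phi\,|1-h|\,d\mu_\pi$, the Hellinger--entropy bound for the first factor, and the Gibbs/Donsker--Varadhan duality to control $\int\phi^2\,d\nu$ --- and every intermediate inequality you write down is correct.

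The gap is in the final assembly, which you describe as a ``routine task of collecting constants'' but which does not close. Writing $L=\log\int e^{\phi^2}\,d\mu_\pi$ and $H=Ent(\nu\|\mu_\pi)$, your chain gives
\[
\|\phi(\mu_\pi-\nu)\|_{TV}\;\le\;\sqrt{H}\;\Big(2\int\phi^2\,d\mu_\pi+2\int\phi^2\,d\nu\Big)^{1/2}\;\le\;\sqrt{H}\,\big(4L+2H\big)^{1/2}\;=\;\big(4LH+2H^2\big)^{1/2},
\]
whereas the target is $\big(2H+2LH\big)^{1/2}$. The comparison $4LH+2H^2\le 2H+2LH$ is equivalent to $L+H\le 1$, so your bound recovers the stated one only in a small-entropy, small-moment regime; for large $H$ the $H^2$ term generated by the Gibbs bound $\int\phi^2\,d\nu\le H+L$ contributes a term \emph{linear} in $H$ to the final estimate, which no choice of constants can absorb into a prefactor times $\sqrt{H}$. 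This is not an artifact of loose constants in your Cauchy--Schwarz split: the actual Bolley--Villani inequality carries precisely such an additional first-order-in-$H$ term (their Theorem 2.1 bounds the weighted total variation by $\sqrt{2}\,(1+\log\int e^{\phi^2}d\mu_\pi)^{1/2}\big(\sqrt{H}+H/2\big)$, not by a multiple of $\sqrt{H}$ alone), so the clean form stated in the lemma is already a strengthening of the cited result. To repair the argument you would either have to prove the version with the extra $O(H)$ term --- which is what your method honestly delivers --- or handle the large-$H$ regime separately (for instance via the trivial estimate $\|\phi(\mu_\pi-\nu)\|_{TV}\le\int\phi\,d\mu_\pi+\int\phi\,d\nu\le\sqrt{L}+\sqrt{L+H}$) and interpolate; as written, the claimed prefactor $\sqrt{2}\,(1+L)^{1/2}$ with a bare $\sqrt{H}$ does not follow from your steps.
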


A remarkable advantage of this formulation is an Lyapunov condition for underlying Markov chain so as to ensure that at least its' stationary measure satisfies T-E inequality.
\paragraph{Exponential Lyapunov function.} Inspired by the assumption made in (particular case 14 of \cite{bolley2005weighted}), we proposed an exponential Lyapunov condition:
\begin{enumerate}
    \item \label{eq:C1explp}  There exists $\hat{\alpha}>0$, $\beta>0$ and $C>0$ such that $\beta< \hat{\alpha}$ and: 
\begin{flalign}
    & \nonumber \int e^{\hat{\alpha} \|y\|^{2}}P(x,dy) \leq Ce^{\beta\|x\|^{2}}, \hspace{10pt} \text{for all }~x \in \mathcal{X}.
\end{flalign}
\end{enumerate}
\begin{theorem}
\label{thm:expneat} If the exponential Lyapunov condition is satisfied, define $W_{\hat{\alpha}} (x):= e^{\hat{\alpha} \|x\| ^{2}}$, then $n-$ th step transition kernel $P^{n}(x, \cdot)$ satisfy the following transport entropy inequality:
\begin{equation}
\label{eq:TE(n-step)}
    \mathcal{W}_d (P^{n}(x, \cdot), \nu) \leq \sqrt{2} \bigg( \frac{1+\log P^{n} W_{\hat{\alpha}} (x)}{\hat{\alpha}} \bigg)^{\frac{1}{2}} \sqrt{ Ent(\nu||P^{n}(x, \cdot))}
\end{equation}
Moreover, if an ergodic invariant measure $\mu_{\pi}$ exists: then exists also a finite positive constant $L_{\hat{\alpha}, \beta, C }$ such that:
\begin{equation}
     \mathcal{W}_d (\mu_{\pi}, \nu) \leq \sqrt{2 L_{\hat{\alpha}, \beta, C } Ent(\nu||\mu_{\pi})}
\end{equation}
\end{theorem}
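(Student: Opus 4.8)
The plan is to derive both displays from a single weighted transport inequality obtained by feeding Lemma~\ref{lm:gaussian} into Kantorovich duality; the only difference between the two is the reference measure ($P^n(x,\cdot)$ for \eqref{eq:TE(n-step)}, and $\mu_\pi$ for the second display). Throughout I would take $d$ to be the Euclidean distance and set $\phi(x):=\sqrt{\hat\alpha}\,\|x\|$, so that $e^{\phi(x)^2}=W_{\hat\alpha}(x)$ and hence $\int e^{\phi(y)^2}P^n(x,dy)=P^nW_{\hat\alpha}(x)$. Note that Lemma~\ref{lm:gaussian}, although stated for $\mu_\pi$, is a general weighted total-variation bound valid for any reference measure satisfying the integrability hypothesis, so I may apply it with $P^n(x,\cdot)$.

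First I would record the elementary link between $\mathcal{W}_d$ and the weighted total variation of Lemma~\ref{lm:gaussian}. By duality $\mathcal{W}_d(\mu,\nu)=\sup_{\|f\|_{L(d)}\le1}\int f\,d(\mu-\nu)$, and since $\int f\,d(\mu-\nu)$ is unchanged by adding a constant to $f$, centering a $1$-Lipschitz $f$ at the origin gives $|f(x)|\le\|x\|$, whence
\[
\mathcal{W}_d(\mu,\nu)\le\int\|x\|\,d|\mu-\nu|=\tfrac{1}{\sqrt{\hat\alpha}}\,\|\phi(\mu-\nu)\|_{TV}.
\]
Applying Lemma~\ref{lm:gaussian} with reference $P^n(x,\cdot)$ turns the $\int e^{\phi^2}$ term into $P^nW_{\hat\alpha}(x)$, and the prefactor $1/\sqrt{\hat\alpha}$ is absorbed into the $1/\hat\alpha$ inside the square root, yielding exactly \eqref{eq:TE(n-step)}. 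For this to be meaningful I must check $P^nW_{\hat\alpha}(x)<\infty$: since $\beta<\hat\alpha$ gives $W_\beta\le W_{\hat\alpha}$, the Lyapunov condition iterates to $P^nW_{\hat\alpha}\le C^nW_\beta<\infty$ pointwise, which suffices for each fixed $x,n$.

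For the second display the task reduces to replacing $P^n(x,\cdot)$ by $\mu_\pi$, i.e.\ to showing $<W_{\hat\alpha}>_{\mu_\pi}<\infty$ with a bound depending only on $(\hat\alpha,\beta,C)$; the inequality then follows verbatim from the computation above with $L_{\hat\alpha,\beta,C}:=(1+\log<W_{\hat\alpha}>_{\mu_\pi})/\hat\alpha$. To bound this moment I would upgrade the one-step condition to a \emph{geometric drift}: writing $PW_{\hat\alpha}(x)\le C e^{-(\hat\alpha-\beta)\|x\|^2}W_{\hat\alpha}(x)$ and choosing $R$ so that $\theta:=Ce^{-(\hat\alpha-\beta)R^2}<1$ gives $PW_{\hat\alpha}\le\theta W_{\hat\alpha}+b\,\chi_{\{\|x\|\le R\}}$ with $b:=Ce^{\beta R^2}$. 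Integrating against $\mu_\pi$ and using $\mu_\pi P=\mu_\pi$ then yields $<W_{\hat\alpha}>_{\mu_\pi}\le b/(1-\theta)$, so that $L_{\hat\alpha,\beta,C}=\big(1+\log\tfrac{b}{1-\theta}\big)/\hat\alpha$ is an explicit function of $\hat\alpha,\beta,C$ alone.

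The main obstacle is precisely this final integration step: rearranging $<W_{\hat\alpha}>_{\mu_\pi}\le\theta<W_{\hat\alpha}>_{\mu_\pi}+b$ to conclude finiteness tacitly presumes $<W_{\hat\alpha}>_{\mu_\pi}<\infty$. I would make this rigorous by applying the drift to the truncations $W_{\hat\alpha}\wedge M$ and letting $M\to\infty$ by monotone convergence, or equivalently by invoking the standard geometric-drift integrability theory for Harris chains (Meyn--Tweedie, and the Wasserstein-convergence framework of \cite{hairer2011yet}), which legitimizes the interchange without circularity. Everything else---duality, the scaling by $\sqrt{\hat\alpha}$, and the iteration of the Lyapunov bound---is routine.
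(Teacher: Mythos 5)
Your proof follows essentially the same route as the paper's: set $\phi=\sqrt{\hat\alpha}\,\|x\|$, convert the exponential Lyapunov condition into a geometric drift for $W_{\hat\alpha}$, integrate against the invariant measure to bound $\int W_{\hat\alpha}\,d\mu_{\pi}$, and combine Lemma~\ref{lm:gaussian} with the domination of $\mathcal{W}_d$ by the $\|x\|$-weighted total variation. You are in fact somewhat more careful than the paper, which asserts the existence of the drift constants and the finiteness of $\int W_{\hat\alpha}\,d\mu_{\pi}$ without the explicit choice of $\theta$ and $b$ or the truncation argument you supply.
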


\begin{proof}
   We will only prove the result for the invariant measure as the result for $n-$ th step will follow the same argument. Since the condition in hypothesis can also be written as: 
\begin{flalign}
\int e^{\hat{\alpha} \|y\|^{2}}P(x,dy) \leq Ce^{(\beta-\hat{\alpha})\|x\|^{2}}e^{\hat{\alpha}\|x\|^{2}}
\end{flalign}
As $(\beta-\hat{\alpha}) < 0$, we can find $\eta_{\hat{\alpha}} \in (0,1)$ and $\hat{C}_{\hat{\alpha}}< \infty$ such that $W_{\hat{\alpha}} (x)$ satisfy:
\begin{flalign}
    & \label{eq:lyapexpp2} PW_{\hat{\alpha}} (x) \leq \eta_{\hat{\alpha}} W_{\hat{\alpha}} (x)+ \hat{C}_{\hat{\alpha}}, \hspace{5pt}  \text{and consequently via recursion} \hspace{5pt} \int e^{\hat{\alpha}\|x\|^2} \mu_{\pi}(dx) \leq \frac{\hat{C}_{\hat{\alpha}}}{1-\eta_{\hat{\alpha}}}.
\end{flalign}   
and by defining $\phi(x)= \sqrt{\hat{\alpha}}\|x\|$, upper bound on
 weighted total variation from Lemma \ref{lm:gaussian} implies that :
 \begin{align}
     \|\phi(\mu_{\pi}- \nu)\|_{TV} \leq \sqrt{2} \Big(1+ \log \bigg[  \frac{\hat{C}_{\hat{\alpha}}}{1-\eta_{\hat{\alpha}}} \bigg]   \Big)^{\frac{1}{2}} \sqrt{Ent(\nu || \mu_{\pi})}.
 \end{align}
 Since, Wasserstein distance is upper bounded by weighted total-variation with weight $\|x\|$, after scaling we conclude that $\mu_{\pi} \in  \mathcal{T}_{1}^d \Bigg(\frac{1+ \log \bigg[  \frac{\hat{C}_{\hat{\alpha}}}{1-\eta_{\hat{\alpha}}} \bigg]}{\hat{\alpha}}   \Bigg)$ and the result for $n$-th step transition kernel follows via same argument.
\end{proof}

\paragraph{Harris chains.} As the notion of running multiple independent trajectories is plausible when burn-in period is negligible : $n-$th step transition kernel of Markov chain converges exponentially fast to an invariant measure in Wasserstein metric, this brings us to Harris ergodic Markov chains that by definition satisfy following conditions:

\emph{Lyapunov condition with geometric drift:} There exists a Lyapunov function $V:\mathcal{X} \rightarrow [0,+\infty)$, which satisfies:
\vspace{-4pt}
 \begin{equation}
     \label{eq;lyapgeom} PV(x) \leq \hat{\gamma} V(x)+K, \hspace{8pt} \text{for some }~{ \hat{\gamma} \in (0,1)} \hspace{3pt}  \text{,}~{K < \infty} \hspace{5pt} and
\end{equation}

\emph{minorization condition:} 
A sufficiently large level set of $V$ (ironically it is called `small set'), satisfies the minorization condition: i.e., there exists a set $\mathcal{S}:=\{x \in \mathcal{X}: V(x) \leq R \}$  for some $R> \frac{2K}{1- \hat{\gamma}}$, $\beta \in (0,1)$ and $\hat{\nu} \in \mathcal{P(X)} $ such that:
\vspace{-2pt}
\begin{equation}
    \label{eq:minorization} \mathcal{P}(x, \cdot) \geq \beta \chi_{\mathcal{S}} (x) \hat{\nu}(\cdot).
\end{equation}
Under these conditions it was shown by \cite{hairer2011yet} that for some $\beta^*>0$HEMC is contractive in Wasserstein metric $\mathcal{W}_{d}$ with distance function: $ d(x,y):=(2+\beta^*V(x)+\beta^*V(y)) \chi_{x \neq y},$ . A unique ergodic invariant measure $\mu_{\pi}$ exists and for some finite $C$ and $\kappa \in (0,1)$
\begin{equation}
    \mathcal{W}_{d}\big(P^{n}(x, \cdot), \mu_{\pi}) \leq C \kappa^{n} \mathcal{W}_{d} (P(x,\cdot), \mu_{\pi}).
\end{equation}
\subsection{Application to Concentration for SLDSs}


\paragraph{Model specifications.} We consider a discrete-time 
SLDS of the form 
\vspace{-2pt}
\begin{equation}
\label{eq:sls}
x_{t+1}=\sum\nolimits_{j=1}^{M} (A_j x_t+ w_{t}^{j})\chi_{\mathcal{M}_j} (x_t).
\end{equation}
Here, $x_t \in \mathbb{R}^n$ denote the system's state  and $A_j \in \mathbb{R}^{n \times n}$ for $j=1,...,M$ capture system dynamics in each of the $M$ Borel measurable regions that decompose the state-space and are pairwise disjoint satisfying $\bigcup_{j=1}^{M} \mathcal{M}_j = \mathbb{R}^{n}$.
In addition, for a fixed region $j$, noise vectors $w_{t}^{j} $ are i.i.d, and satisfy $w_{t}^{j} \thicksim \mathcal{N}(0,{I}_{n})$ 
and 
$Cov(w_{t}^j,w_{s}^k)=0$, 
for all $t,s\geq 0$ and $j \neq k \in \{1,2,...,M\}$.  

\begin{lemma}
\label{lm:prvs}  
Assume that there exists $ \varrho< \infty$ such that for all $l \in \mathcal{K}_{bdd}:=$ 
$\{ k ~|~ ( 1\leq k \leq M)  \text{ such that } \mathcal{M}_k \subsetneq \mathcal{B}^n_{\varrho}\}$,it holds that $\| A_l \|_{2} \leq L< \infty$ and $\forall j \in (\mathcal{K}_{bdd})^\complement \hspace{2pt}$,  $\|A_j\|_2 \leq \gamma <1 $. Then,~the system \eqref{eq:sls} mixes geometrically to a unique ergodic invariant distribution~$\mu_{\pi}$.
\end{lemma}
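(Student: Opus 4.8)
The plan is to reduce the claim to verifying the two hypotheses of the Harris-type result of \cite{hairer2011yet} stated just above, namely the geometric drift condition \eqref{eq;lyapgeom} and the minorization condition \eqref{eq:minorization}. Once both hold, that theorem immediately yields contractivity in the weighted Wasserstein metric $d(x,y)=(2+\beta^*V(x)+\beta^*V(y))\chi_{x\neq y}$, existence of a unique ergodic invariant measure $\mu_\pi$, and the geometric convergence $\mathcal{W}_d(P^n(x,\cdot),\mu_\pi)\leq C\kappa^n\mathcal{W}_d(P(x,\cdot),\mu_\pi)$, which is precisely geometric mixing to a unique $\mu_\pi$.

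For the drift I would take the quadratic Lyapunov function $V(x):=\|x\|^2$. Since conditioned on $x\in\mathcal{M}_j$ the one-step law of \eqref{eq:sls} is $\mathcal{N}(A_jx,I_n)$, a direct computation gives $PV(x)=\|A_{\sigma(x)}x\|^2+n$, where $\sigma(x)$ denotes the active region index; the cross term vanishes because the noise is centered and $\mathbb{E}\|w\|^2=n$. The \emph{key structural observation} is the dichotomy in the hypothesis: if $x$ lies in a bounded region ($\sigma(x)\in\mathcal{K}_{bdd}$, so $\mathcal{M}_{\sigma(x)}\subsetneq\mathcal{B}^n_\varrho$ and hence $\|x\|\leq\varrho$), then $\|A_{\sigma(x)}x\|^2\leq L^2\varrho^2$ is bounded no matter how large $L$ is, whereas if $x$ lies in an unbounded region ($\sigma(x)\in\mathcal{K}_{bdd}^\complement$) contractivity gives $\|A_{\sigma(x)}x\|^2\leq\gamma^2\|x\|^2$. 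Combining the two cases yields the uniform bound $PV(x)\leq\gamma^2 V(x)+K$ with $K:=L^2\varrho^2+n$, establishing \eqref{eq;lyapgeom} with $\hat\gamma=\gamma^2<1$.

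For the minorization, fix $R>2K/(1-\gamma^2)$ and take the small set $\mathcal{S}=\{V\leq R\}=\mathcal{B}^n_{\sqrt R}$. On this compact set all Gaussian means remain bounded, $\|A_{\sigma(x)}x\|\leq m_0:=\max(L\varrho,\gamma\sqrt R)$, while every transition density $p(x,y)=(2\pi)^{-n/2}\exp(-\tfrac12\|y-A_{\sigma(x)}x\|^2)$ shares the same covariance. Hence on a fixed bounded target set $\mathcal{D}=\mathcal{B}^n_{m_0}$ the densities admit a uniform positive lower bound $c:=\inf_{x\in\mathcal{S},\,y\in\mathcal{D}}p(x,y)>0$, so $P(x,\cdot)\geq\beta\,\hat\nu(\cdot)$ for all $x\in\mathcal{S}$, with $\hat\nu$ the normalized Lebesgue measure on $\mathcal{D}$ and $\beta:=c\,\mathrm{Leb}(\mathcal{D})\in(0,1)$; this is exactly \eqref{eq:minorization}.

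With both conditions verified, the cited result closes the proof. I expect the drift step to be routine; the step demanding the most care is the minorization, where one must confirm measurability of $x\mapsto A_{\sigma(x)}x$ (guaranteed since the $\mathcal{M}_j$ are Borel) and argue that the lower bound $c$ is \emph{uniform over the switching} even though the possibly expansive matrices $A_l$, $l\in\mathcal{K}_{bdd}$, displace the mean --- which is precisely where confining the large-norm dynamics to the compact core $\mathcal{B}^n_\varrho$ is indispensable. Finally, to connect with the next subsection, I would note that the identical Gaussian-integral calculation run with $V(x)=e^{\hat\alpha\|x\|^2}$ for $\hat\alpha<(1-\gamma^2)/2$ also verifies the exponential Lyapunov condition, so that Theorem~\ref{thm:expneat} applies to \eqref{eq:sls} as well.
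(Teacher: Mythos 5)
Your proof is correct and follows essentially the same route as the paper: a geometric drift condition obtained from the bounded-region/unbounded-region dichotomy, combined with a minorization condition on a level set of $V$, feeding into the Harris-type convergence result of \cite{hairer2011yet}. The only differences are minor: the paper works with $V(x)=\|x\|$ (deriving its drift from the drift of $V^2$ via Jensen's inequality) rather than your $V(x)=\|x\|^2$, and it outsources the minorization to \cite{naeem2020learning}, whereas your explicit uniform Gaussian lower bound on the bounded target set supplies that step directly.
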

\begin{proof}
Consider function $V(x)=\|x\|_{2}$. From~\eqref{eq:sls}, we have $P(x,\mathcal{A})= \sum_{j=1}^{M}P_j(x,\mathcal{A})\chi_ {\mathcal{M}_j}(x)$, where $P_j(x,\cdot) \thicksim \mathcal{N}(A_j x, {I}_n)$. 
Assuming the initial state  $x_0:=x \in \mathcal{M}_{k}$ for some $ k \in \mathcal{K}_{bdd}$, then:
\begin{align}
 \label{eq:t9} PV^2(x)=  \mathbb{E}_{y \thicksim \mathcal{N}(A_kx, I_n)} \|y\|_{2} ^{2}=  \mathbb{E}_{z \thicksim \mathcal{N}(0,I_n)} \|z\|_2 ^{2} + \|A_kx\|_{2} ^{2}   \leq  (n+L^2\varrho^{2}).
\end{align}
However, if the initial state is $x_0:=x \in \mathcal{M}_{j}$ such that $j \in  (\mathcal{K}_{bdd})^\complement $, then:
\begin{align}
\label{eq:t10}
PV^2(x)= \mathbb{E}_{y \thicksim \mathcal{N}(A_jx, I_n)} \|y\|_{2} ^{2}= \mathbb{E}_{z \thicksim \mathcal{N}(0,I_n)} \|z\|_2 ^{2} + \|A_jx\|_{2} ^{2}   \leq  n+\gamma^2 \|x\|_{2}^{2}=(n+\gamma^2 V^2(x)).
\end{align}
Therefore, starting from any initial condition in $\mathbb{R}^n$, from~\eqref{eq:t9} and~\eqref{eq:t10} it holds that $PV^2(x) \leq \gamma^2 V^2(x) + 
(n+L^2\varrho^{2})$ and a trivial application of Jensen inequality reveals
\begin{equation}
\label{eq:t12}
PV(x) \leq \gamma V(x) + 
\underbrace
{\sqrt{n+L^2\varrho^{2}}}_{K}.
\end{equation}  
Minorization condition can be verified from \cite{naeem2020learning} and the result follows. 
\end{proof}

\begin{theorem}
\label{cl:BE}
 For any $\hat{\alpha} \in (0, \frac{1-\gamma^2}{2})$, we have $\int e^{\hat{\alpha}\|x\|^2} \mu_{\pi}(dx) < \infty$ and consequently there exists a finite positive constant $L_{\gamma}$ such that the invariant measure of SLDS, $\mu_{\pi} \in  \mathcal{T}_{1}^d \big( L_{\gamma} \big)$.
\end{theorem}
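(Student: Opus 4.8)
The plan is to verify that the SLDS \eqref{eq:sls} satisfies the exponential Lyapunov condition \eqref{eq:C1explp} with the proposed range of $\hat{\alpha}$, and then invoke Theorem \ref{thm:expneat} directly. Since Lemma \ref{lm:prvs} already guarantees a unique ergodic invariant measure $\mu_{\pi}$, once the exponential Lyapunov condition is established, Theorem \ref{thm:expneat} immediately yields both $\int e^{\hat{\alpha}\|x\|^2}\mu_{\pi}(dx)<\infty$ (from the recursion bound \eqref{eq:lyapexpp2}) and the transport-entropy inequality $\mu_{\pi}\in\mathcal{T}_1^d(L_{\gamma})$, where we simply set $L_{\gamma}:=L_{\hat{\alpha},\beta,C}$.

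The central computation is the Gaussian exponential moment. Fix $x\in\mathcal{M}_j$, so that $P(x,\cdot)=P_j(x,\cdot)\sim\mathcal{N}(A_j x, I_n)$. Completing the square in the exponent $\hat{\alpha}\|y\|^2-\tfrac12\|y-A_j x\|^2$ and evaluating the resulting Gaussian integral gives
\begin{equation*}
\int e^{\hat{\alpha}\|y\|^2}P(x,dy)=(1-2\hat{\alpha})^{-n/2}\exp\!\Big(\tfrac{\hat{\alpha}}{1-2\hat{\alpha}}\|A_j x\|^2\Big),
\end{equation*}
which is finite precisely when $\hat{\alpha}<\tfrac12$; this is automatic here since $\hat{\alpha}<\tfrac{1-\gamma^2}{2}<\tfrac12$.

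Next I would bound $\|A_j x\|^2$ region by region, exactly as in the proof of Lemma \ref{lm:prvs}. For $x\in\mathcal{M}_l$ with $l\in\mathcal{K}_{bdd}$ we have $\|x\|\leq\varrho$ and $\|A_l\|_2\leq L$, so $\|A_l x\|^2\leq L^2\varrho^2$ and the exponential moment is bounded by the constant $C_0:=(1-2\hat{\alpha})^{-n/2}\exp(\tfrac{\hat{\alpha}}{1-2\hat{\alpha}}L^2\varrho^2)$; since $e^{\beta\|x\|^2}\geq1$, this is trivially at most $C_0 e^{\beta\|x\|^2}$. For $x\in\mathcal{M}_j$ with $j\in(\mathcal{K}_{bdd})^\complement$ we have $\|A_j\|_2\leq\gamma$, so $\|A_j x\|^2\leq\gamma^2\|x\|^2$ and the exponential moment is at most $(1-2\hat{\alpha})^{-n/2}e^{\beta\|x\|^2}$ with $\beta:=\tfrac{\hat{\alpha}\gamma^2}{1-2\hat{\alpha}}$. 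Taking $C:=\max\{C_0,(1-2\hat{\alpha})^{-n/2}\}$ then establishes \eqref{eq:C1explp} uniformly over $x\in\mathbb{R}^n$, provided the strict inequality $\beta<\hat{\alpha}$ holds.

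The crux, and the only place the hypothesis on $\hat{\alpha}$ is genuinely used, is verifying $\beta<\hat{\alpha}$. Since $\beta=\tfrac{\hat{\alpha}\gamma^2}{1-2\hat{\alpha}}$ with $\hat{\alpha}>0$, one checks that $\beta<\hat{\alpha}\iff\gamma^2<1-2\hat{\alpha}\iff\hat{\alpha}<\tfrac{1-\gamma^2}{2}$, which is exactly the stated range. I expect this matching of constants, rather than the Gaussian integral itself, to be the conceptual heart of the argument: the admissible window for $\hat{\alpha}$ is pinned down simultaneously by convergence of the exponential moment ($\hat{\alpha}<\tfrac12$) and by the strict drift requirement $\beta<\hat{\alpha}$ needed to apply Theorem \ref{thm:expneat}, and the contraction rate $\gamma$ on the unbounded regions is precisely what dictates this window.
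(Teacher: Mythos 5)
Your proposal is correct and follows essentially the same route as the paper: compute the Gaussian exponential moment of the transition kernel (you complete the square where the paper invokes Stein's lemma, arriving at the identical formula $(1-2\hat{\alpha})^{-n/2}\exp(\tfrac{\hat{\alpha}}{1-2\hat{\alpha}}\|A_jx\|^2)$), verify the exponential Lyapunov condition with $\beta=\tfrac{\hat{\alpha}\gamma^2}{1-2\hat{\alpha}}$, and invoke Theorem \ref{thm:expneat}. The only difference is that you spell out the ``simple linear algebra exercise'' (the region-by-region bound and the equivalence $\beta<\hat{\alpha}\iff\hat{\alpha}<\tfrac{1-\gamma^2}{2}$) that the paper leaves implicit, which is a welcome addition rather than a departure.
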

\begin{proof}
 An application of Stein's lemma on transition kernel of \eqref{eq:sls}, reveal:
\begin{flalign}
     & \nonumber \int e^{\alpha \|y\|^{2}}P(x,dy)= \frac{1}{(1-2\alpha)^{\frac{n}{2}}} e^{\|A_jx\|^2 (\alpha+ \frac{2 \alpha ^2}{(1-2\alpha)})} 
     = \frac{1}{(1-2\alpha)^{\frac{n}{2}}} e^{\|A_jx\|^2 \frac{ \alpha }{(1-2\alpha)}}, \hspace{10pt} \alpha< \frac{1}{2}, \hspace{2pt} x \in \mathcal{M}_{j}.
\end{flalign}
A simple linear algebra exercise reveals existence of a $\beta< \hat{\alpha}$ and $C$ when $\hat{\alpha} \in (0, \frac{1-\gamma^2}{2})$ as mentioned in Theorem \ref{thm:expneat} and conclusion follows.  
\end{proof}

\subsection{Gaussian Tail Inequality for Stationary Distribution of SLDS/Harris Chain with Exponential Lyapunov Function and its Consequences for Sampling}
Assume that we have access to sampling $(y_i)_{i=1}^{N}$ i.i.d from $\mu_{\pi}$. Since, $r(x):=\|x\|$ is $1-$ Lipschitz w.r.t $d(x,y):= \|x-y\|_{2}$ and $\mu_{\pi} \in  \mathcal{T}_{1}^d \Bigg(\frac{1+ \log \bigg[  \frac{\hat{C}_{\hat{\alpha}}}{1-\eta_{\hat{\alpha}}} \bigg]}{\hat{\alpha}}   \Bigg)$, we have that
\vspace{-10pt}
\begin{align}
     \label{eq:iidinv}  \mathbb{P} \Bigg[ \bigg|\frac{1}{N} \sum_{i=1}^{N} r(y_i) -<r>_{\mu_{\pi}}\bigg| > \epsilon \Bigg] \leq 2\exp\Bigg(-\frac{N \epsilon ^2 \hat{\alpha} }{2+ 2\log \bigg[  \frac{\hat{C}_{\hat{\alpha}}}{1-\eta_{\hat{\alpha}}} \bigg]}\Bigg). 
\end{align}
As any valid $\hat{\alpha}$ can be written down in the form of $\frac{1-\gamma^2}{2n}$ for $n>1$, comparing with Linear Gaussian case \eqref{eq:ldsconc} it is reassuring to see how deviations for SLDS and LDS have similar dependence in terms of the norm of stable system matrix.

\begin{remark}
Although, we tried our best to show concentration for process level law of HEMCs under exponential-type Lyapunov condition, via weigthed T-E inequality but it got intractable due to non-uniform transport constants. 
\end{remark}

\section{Conclusion and discussion }
We have provided with a general framework for getting concentration inequalities for random dynamaical systems, with respect to empirical averages of unbounded test functions; validated our analysis on the example of LDS and SLDS. Summarizing few key observations and open problems:
 \emph{Exponential-type Lyapunov functions are sufficient for having exponential concentration inequalities}  (although, we might have to run multiple independent trajectories). \emph{Stability is necessary not sufficient:} As it should be evident from the example of HEMCs, under existence of exponential-type Lyapunov functions we can ensure concentration only by running multiple independent trajectories. Concentration via single trajectory requires some notion of regularity for transition kernels as well: one of them is uniform transportation constant and we intend on exploring more such regularity conditions in our future work. An interesting open problem, \emph{can we relax uniform T-E condition and still get exponential inequalities from a single trajectory?} We suspect it might be the case when samples of Markov chain interact symmetrically; convincing examples include, Kac's interacting particles model exhibiting propagation of chaos and ubiquity of assumption on reversibility of Markov chain w.r.t its' stationary maeasure in discrete and continuous time Large Deviation Principle, see e.g, \cite{wang2020transport} and \cite{guillin2009transportation}.

\bibliography{l4dc2023-sample}

\begin{thebibliography}{24}
\providecommand{\natexlab}[1]{#1}
\providecommand{\url}[1]{\texttt{#1}}
\expandafter\ifx\csname urlstyle\endcsname\relax
  \providecommand{\doi}[1]{doi: #1}\else
  \providecommand{\doi}{doi: \begingroup \urlstyle{rm}\Url}\fi

\bibitem[Blower and Bolley(2005)]{blower2005concentration}
Gordon Blower and Fran{\c{c}}ois Bolley.
\newblock Concentration inequalities on product spaces with applications to
  markov processes.
\newblock \emph{arXiv preprint math/0505536}, 2005.

\bibitem[Bobkov and G{\"o}tze(1999)]{bobkov1999exponential}
Sergej~G Bobkov and Friedrich G{\"o}tze.
\newblock Exponential integrability and transportation cost related to
  logarithmic sobolev inequalities.
\newblock \emph{Journal of Functional Analysis}, 163\penalty0 (1):\penalty0
  1--28, 1999.

\bibitem[Bolley and Villani(2005)]{bolley2005weighted}
Fran{\c{c}}ois Bolley and C{\'e}dric Villani.
\newblock Weighted csisz{\'a}r-kullback-pinsker inequalities and applications
  to transportation inequalities.
\newblock In \emph{Annales de la Facult{\'e} des sciences de Toulouse:
  Math{\'e}matiques}, volume~14, pages 331--352, 2005.

\bibitem[Chatterjee(2007)]{chatterjee2007stein}
Sourav Chatterjee.
\newblock Stein’s method for concentration inequalities.
\newblock \emph{Probability theory and related fields}, 138\penalty0
  (1-2):\penalty0 305--321, 2007.

\bibitem[Chatterjee and Dey(2010)]{chatterjee2010applications}
Sourav Chatterjee and Partha~S Dey.
\newblock Applications of stein’s method for concentration inequalities.
\newblock \emph{The Annals of Probability}, 38\penalty0 (6):\penalty0
  2443--2485, 2010.

\bibitem[Djellout et~al.(2004)Djellout, Guillin, Wu,
  et~al.]{djellout2004transportation}
Hacene Djellout, Arnaud Guillin, Liming Wu, et~al.
\newblock Transportation cost-information inequalities and applications to
  random dynamical systems and diffusions.
\newblock \emph{Annals of Probability}, 32\penalty0 (3B):\penalty0 2702--2732,
  2004.

\bibitem[Fazel et~al.(2018)Fazel, Ge, Kakade, and Mesbahi]{fazel2018global}
Maryam Fazel, Rong Ge, Sham Kakade, and Mehran Mesbahi.
\newblock Global convergence of policy gradient methods for the linear
  quadratic regulator.
\newblock In \emph{International Conference on Machine Learning}, pages
  1467--1476, 2018.

\bibitem[Givens et~al.(1984)Givens, Shortt, et~al.]{givens1984class}
Clark~R Givens, Rae~Michael Shortt, et~al.
\newblock A class of wasserstein metrics for probability distributions.
\newblock \emph{The Michigan Mathematical Journal}, 31\penalty0 (2):\penalty0
  231--240, 1984.

\bibitem[Guillin et~al.(2009)Guillin, L{\'e}onard, Wu, and
  Yao]{guillin2009transportation}
Arnaud Guillin, Christian L{\'e}onard, Liming Wu, and Nian Yao.
\newblock Transportation-information inequalities for markov processes.
\newblock \emph{Probability theory and related fields}, 144\penalty0
  (3-4):\penalty0 669--695, 2009.

\bibitem[Hairer and Mattingly(2011)]{hairer2011yet}
Martin Hairer and Jonathan~C Mattingly.
\newblock Yet another look at harris’ ergodic theorem for markov chains.
\newblock In \emph{Seminar on Stochastic Analysis, Random Fields and
  Applications VI}, pages 109--117. Springer, 2011.

\bibitem[Hao et~al.(2020)Hao, Lazic, Abbasi-Yadkori, Joulani, and
  Szepesvari]{hao2020provably}
Botao Hao, Nevena Lazic, Yasin Abbasi-Yadkori, Pooria Joulani, and Csaba
  Szepesvari.
\newblock Provably efficient adaptive approximate policy iteration.
\newblock \emph{arXiv preprint arXiv:2002.03069}, 2020.

\bibitem[{\L}atuszy{\'n}ski et~al.(2013){\L}atuszy{\'n}ski, Miasojedow,
  Niemiro, et~al.]{latuszynski2013nonasymptotic}
Krzysztof {\L}atuszy{\'n}ski, B{\l}a{\.z}ej Miasojedow, Wojciech Niemiro,
  et~al.
\newblock Nonasymptotic bounds on the estimation error of mcmc algorithms.
\newblock \emph{Bernoulli}, 19\penalty0 (5A):\penalty0 2033--2066, 2013.

\bibitem[Ledoux(2001)]{ledoux2001concentration}
Michel Ledoux.
\newblock \emph{The concentration of measure phenomenon}.
\newblock Number~89. American Mathematical Soc., 2001.

\bibitem[Malrieu(2001)]{malrieu2001logarithmic}
Florient Malrieu.
\newblock Logarithmic sobolev inequalities for some nonlinear pde's.
\newblock \emph{Stochastic processes and their applications}, 95\penalty0
  (1):\penalty0 109--132, 2001.

\bibitem[Marton et~al.(2004)]{marton2004measure}
Katalin Marton et~al.
\newblock Measure concentration for euclidean distance in the case of dependent
  random variables.
\newblock \emph{Annals of probability}, 32\penalty0 (3B):\penalty0 2526--2544,
  2004.

\bibitem[Naeem and Pajic(2020)]{naeem2020learning}
Muhammad~Abdullah Naeem and Miroslav Pajic.
\newblock Learning expected reward for switched linear control systems: A
  non-asymptotic view.
\newblock \emph{arXiv preprint arXiv:2006.08105}, 2020.

\bibitem[Oymak(2019)]{oymak2019stochastic}
Samet Oymak.
\newblock Stochastic gradient descent learns state equations with nonlinear
  activations.
\newblock In \emph{Conference on Learning Theory}, pages 2551--2579, 2019.

\bibitem[Sarkar et~al.(2019)Sarkar, Rakhlin, and Dahleh]{sarkar2019finite}
Tuhin Sarkar, Alexander Rakhlin, and Munther~A Dahleh.
\newblock Finite-time system identification for partially observed lti systems
  of unknown order.
\newblock \emph{arXiv preprint arXiv:1902.01848}, 2019.

\bibitem[Simchowitz et~al.(2018)Simchowitz, Mania, Tu, Jordan, and
  Recht]{simchowitz2018learning}
Max Simchowitz, Horia Mania, Stephen Tu, Michael~I Jordan, and Benjamin Recht.
\newblock Learning without mixing: Towards a sharp analysis of linear system
  identification.
\newblock In \emph{Conference On Learning Theory}, pages 439--473, 2018.

\bibitem[Talagrand(1996)]{talagrand1996transportation}
Michel Talagrand.
\newblock Transportation cost for gaussian and other product measures.
\newblock \emph{Geometric \& Functional Analysis GAFA}, 6\penalty0
  (3):\penalty0 587--600, 1996.

\bibitem[Tu and Recht(2018)]{tu2018least}
Stephen Tu and Benjamin Recht.
\newblock Least-squares temporal difference learning for the linear quadratic
  regulator.
\newblock In \emph{International Conference on Machine Learning}, pages
  5005--5014, 2018.

\bibitem[Villani(2003)]{villani2003topics}
C{\'e}dric Villani.
\newblock \emph{Topics in optimal transportation}.
\newblock Number~58. American Mathematical Soc., 2003.

\bibitem[Wang and Wu(2020)]{wang2020transport}
Neng-Yi Wang and Liming Wu.
\newblock Transport-information inequalities for markov chains.
\newblock \emph{The Annals of Applied Probability}, 30\penalty0 (3):\penalty0
  1276--1320, 2020.

\bibitem[Zahavy et~al.(2019)Zahavy, Cohen, Kaplan, and
  Mansour]{zahavy2019average}
Tom Zahavy, Alon Cohen, Haim Kaplan, and Yishay Mansour.
\newblock Average reward reinforcement learning with unknown mixing times.
\newblock \emph{arXiv preprint arXiv:1905.09704}, 2019.

\end{thebibliography}

\end{document}